\documentclass{article}
\usepackage{iclr2027_conference,times}

\usepackage{amsmath,amsfonts,bm}

\def\eqref#1{equation~\ref{#1}}

\def\1{\bm{1}}

\DeclareMathAlphabet{\mathsfit}{\encodingdefault}{\sfdefault}{m}{sl}
\SetMathAlphabet{\mathsfit}{bold}{\encodingdefault}{\sfdefault}{bx}{n}

\usepackage{hyperref}
\usepackage{url}

\usepackage{graphicx}
\usepackage{wrapfig}
\usepackage{algorithm}
\usepackage{algpseudocode}
\usepackage{booktabs}
\usepackage{multirow}
\usepackage{amssymb}
\usepackage{amsthm}
\newtheorem{lemma}{Lemma}

\theoremstyle{definition}

\title{Preconditioned Physics-Informed Neural Operator Training}

\author{Shizheng Wen\thanks{Equal contribution.} \\
ETH Zürich \\
\texttt{shizheng.wen@math.ethz.ch}
\And
Siddhartha Mishra \\
ETH Zürich \\
\texttt{siddhartha.mishra@math.ethz.ch}
\And
Marius Zeinhofer\footnotemark[1] \\
ETH Zürich \\
\texttt{marius.zeinhofer@math.ethz.ch}
}

\iclrfinalcopy 
\begin{document}

\maketitle
\lhead{Preprint}

\begin{abstract}
    Neural operators are typically trained in a supervised fashion, which requires a dataset to be generated with a classical solver. Training them physics-informed, i.e., purely from the governing equations, removes this large offline cost and allows fresh samples to be drawn at every optimization step, but has so far been limited to simplified problems and trails supervised training in accuracy. The obstacle is the ill-conditioning of physics-informed losses, which differential operators induce and which worsens as the discretization is refined. We therefore propose a preconditioned residual loss function and show mesh-independent conditioning for elliptic problems and greatly improved conditioning for saddle point problems. Realized through geometric and algebraic multigrid, the construction applies to linear and nonlinear equations, steady or time-dependent, on structured and unstructured meshes, is agnostic to the neural operator architecture, and adds no cost at inference. On the Poisson, Allen-Cahn and stationary Stokes equations, the resulting label-free training matches supervised training and is four to twenty-five times more accurate than previous physics-informed operator learning methods.
\end{abstract}

\section{Introduction}

\paragraph{Neural Operators}
Despite continued advances in numerical methods and computing power, solving partial differential equations (PDEs) remains computationally expensive. This is particularly relevant in many-query settings such as uncertainty quantification, optimal control, and inverse problems. The development of fast surrogate models for PDEs is therefore of great importance. With the advent of deep learning, neural operators have emerged as a powerful computational tool for this purpose \citep{li2021fourier, lu2021learning, kovachki2023neural,reno}, reducing PDE solution times by orders of magnitude compared to traditional solvers \citep{li2021fourier}. Neural operators approximate maps between infinite-dimensional function spaces by neural networks and are typically trained in a supervised fashion on a dataset generated offline by a numerical solver. Data generation is a major bottleneck, as it can constitute a substantial part of the overall computational and engineering effort \citep{NEURIPS2022_0a974713, NEURIPS2024_4f9a5acd}. This is especially the case for physics foundation models, which are pre-trained on a wide array of PDE types \citep{NEURIPS2024_84e1b1ec, mccabe2024multiple, mccabe2026walrus}, and in general, the speed-up at inference only pays off once this offline cost is amortized. Moreover, a finite dataset caps the attainable accuracy, as every sample is revisited many times during training.

\begin{figure}[t]
   \centering
   \includegraphics[width=\linewidth]{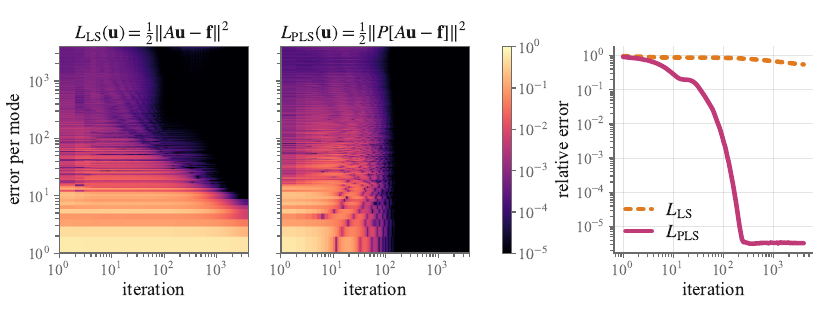}
   \caption{Ill-conditioning in physics-informed learning: The figure compares Adam minimization of a linear model with and without preconditioning for a physics-informed loss originating from a Poisson problem on a $65^2$ grid. The \textbf{left} and \textbf{centre} report error in each eigenmode of $A$ along the iteration, with the smoothest mode at the bottom. The \textbf{right} figure compares the relative $L^2$ error during training.}
   \label{fig:graphical_abstract}
\end{figure}

\paragraph{Physics-Informed Operator Training}
When the governing physics is exactly known, neural operators can---in principle---be trained in a unsupervised fashion, relying on residual or variational loss functions. This approach is referred to as \emph{physics-informed neural operator training} and has been pursued almost since the inception of neural operators \citep{li2024physics, wang2021learning}. It removes the limitations of supervised training discussed above: no dataset needs to be generated, samples can be drawn afresh at every training step---training at the infinite-data limit---and adapting to a new input distribution or geometry requires no new data. However, the success of physics-informed neural operator training has so far remained limited to simplistic problems. Purely physics-informed training converges slowly and to inaccurate solutions, as our experiments confirm (Table~\ref{tab:summary}), and adding training data is observed to ease the optimization \citep{li2024physics}. As a result, supervised learning with its offline data-generation pipeline remains the gold standard for almost all applications. This raises the question: what makes physics-informed losses so hard to train?

\paragraph{Ill-Conditioning of Physics-Informed Losses}
The fundamental reason is that differential operators amplify high frequencies. For example, the Laplacian scales a mode of frequency $\omega$ by $\omega^2$, so finer discretizations imply stronger amplification. To see how this affects training, consider a $P_1$ finite element discretization $A\mathbf u = \mathbf f$ of Poisson's equation with stiffness matrix $A$ and discrete solution $\mathbf u^* = A^{-1}\mathbf f$. On a single sample, the supervised and the physics-informed least-squares losses are
\begin{equation}\label{eq:data_vs_ls_loss}
   L_{\textup{data}}(\mathbf u) = \tfrac12\|\mathbf u - \mathbf u^*\|^2,
   \qquad
   L_{\textup{LS}}(\mathbf u) = \tfrac12\|A\mathbf u - \mathbf f\|^2 = \tfrac12\|A(\mathbf u - \mathbf u^*)\|^2.
\end{equation}
Both losses have the same minimizer and differ only through $A$. Yet this factor is decisive: the Hessian of $L_{\textup{data}}$ is the identity, whereas the Hessian of $L_{\textup{LS}}$ is $A^\top A$, whose condition number grows like $h^{-4}$ where $h$ is the mesh size. Since the number of gradient descent iterations scales with the condition number, physics-informed training becomes challenging already on coarse meshes. This is true even for a linear model without any neural network and optimization with Adam \citep{kingma2015adam}, see Figure~\ref{fig:graphical_abstract}.

\paragraph{Preconditioned Physics-Informed Training}
To remove this ill-conditioning, we use \emph{preconditioning}. We modify the physics-informed loss to
\begin{equation}\label{eq:preconditioned_least_squares_loss}
   L_{\textup{PLS}}(\mathbf u) = \tfrac12\|P(A\mathbf u - \mathbf f)\|^2 = \tfrac12\|PA(\mathbf u - \mathbf u^*)\|^2,
\end{equation}
with an invertible preconditioner $P$. The choice $P = I$ recovers $L_{\textup{LS}}$, and $P = A^{-1}$ recovers the perfectly conditioned supervised loss $L_{\textup{data}}$. Between these extremes, fast approximate inverses $P\approx A^{-1}$---we rely on geometric and algebraic multigrid methods \citep{brenner2008mathematical}---bring the conditioning close to that of supervised training and add negligible overhead. Both the residual and the preconditioner are assembled and applied on the GPU inside the training loop with a differentiable finite element stack \citep{naumov2015amgx,wen2026learning, chi2026torchsla}. Since $P$ enters only the loss function, the approach works with any neural operator architecture and adds no extra cost at inference. The construction extends to nonlinear residuals, where $P$ approximates the inverse Jacobian, and to saddle-point problems via block-diagonal preconditioners in Section~\ref{sec:stokes}.

\subsection{Main Contributions} The main contributions are
\begin{itemize}
   \item \textbf{A preconditioned physics-informed loss.} We propose a novel, preconditioned physics-informed loss function, reducing physics-induced ill-conditioning from $\mathcal O(h^{-4})$ to $\mathcal O(1)$ for elliptic problems and to $\mathcal O(h^{-2})$ for saddle point problems. The construction applies to linear and nonlinear equations, is agnostic to the neural operator architecture, adds no cost at inference, and, realized through multigrid methods, adds negligible cost during training.
   \item \textbf{Physics-informed training that matches supervised training.} This lets us train neural operators fully physics-informed on the Poisson, Allen--Cahn and stationary Stokes equations for both structured and unstructured grids. For all examples, we closely match or surpass supervised training (Table~\ref{tab:summary}). The PINO and physics-informed DeepONet baselines are four to twenty-five times less accurate under the identical protocol. Since no labels are needed, samples can moreover be drawn afresh at every step; for the Poisson equation, this matches supervised training on $4096$ labelled solutions (Figure~\ref{fig:data-scaling}).
\end{itemize}

\subsection{Related Literature}
The challenging training of physics-informed losses is well documented for neural PDE solvers that solve a single PDE instance, such as PINNs \citep{raissi2019physics} or the deep Ritz method \citep{e2018deep}. Different loss terms converge at different rates \citep{wang2021understanding} and need adaptive re-weighting \citep{wang2022and}. Similar to our viewpoint, \citet{de2024operator} attribute ill-conditioning to the underlying PDE operator. The most effective remedies are geometric optimization methods \citep{muller2023achieving, muller2024optimization} that precondition in parameter space based on the problem structure in function space. However, they scale like $\mathcal O(D^3)$ in the number of network parameters, and the low-rank variants that avoid this \citep{guzman2025improving} replace it by $\mathcal O(N^2D)$ in the number of residual evaluations $N$. For neural operators, both are out of reach. An alternative route considers preconditioners as part of the architecture for such neural PDE solvers, we discuss this in detail in Appendix~\ref{appendix:preconditioning-through-architecture}.

Physics-informed training of neural operators was introduced in \citep{li2024physics} and physics-informed DeepONets \citep{wang2021learning}, which minimize strong-form residuals obtained by finite differences and by automatic differentiation. Remedies for their optimization are loss reweighting, collocation sampling and architecture choices carried over from PINNs \citep{chen2026training}, as well as adding labelled data \citep{li2024physics} or the use in finetuning \citep{medvedev2026physics}. Finite element residuals have also been used as operator-learning losses \citep{yamazaki2025finite, wen2026learning}. None of these works addresses the ill-conditioning of PDE operators in the loss, which is the decisive factor.

\section{Methods}\label{sec:physics_informed_training}

\paragraph{Problem Setting}
We aim to approximate an operator $G$ between function spaces $\Pi$ and $U$
\begin{equation*}
   G: \Pi \to U, \quad \rho \mapsto G(\rho)
\end{equation*}
and we set $u(\rho)= G(\rho)$. Typically, we endow $\Pi$ with a probability measure $\mu$ corresponding to the data distribution the application dictates. The operator $G$ is approximated by a neural operator $G_\theta$---a deep neural network with trainable weights $\theta\in\Theta=\mathbb R^D$ that maps between the same spaces---and we set $u(\theta, \rho)=G_\theta(\rho)$. In our applications, $G$ is implicitly given as the solution operator of a parametric PDE problem, meaning $u(\rho)$ solves
\begin{equation}\label{eq:residual_characterization_of_neural_operator}
   R(u(\rho); \rho) = 0,
\end{equation}
where $R:U\times \Pi \to V^*$ is the PDE residual, $V$ is a space of test functions and $V^*$ its dual. PDE problems in the form of \eqref{eq:residual_characterization_of_neural_operator} are frequently solved by Galerkin methods, restricting the space $U$ to a subspace $U_h$ spanned by basis functions $\phi_1,\dots,\phi_n$ and by replacing the co-domain by the dual $V_h^*$ of a finite-dimensional space of test functions $V_h\subset V$. Solving the discretized PDE for a given $\rho\in\Pi$ then means to find $u_h(\rho)\in U_h$ satisfying $R_h(u_h(\rho);\rho)=0$. 

\paragraph{Example} For the solution operator of the homogeneous Poisson equation on the domain $\Omega\subset \mathbb R^d$, we set $U = V = H^1_0(\Omega)$ and $\Pi = V^* = H^{-1}(\Omega)$, and the residual is defined as
\begin{equation*}
   R(u,\rho)(v) = \int_\Omega\nabla u\nabla v\,\mathrm dx - \rho(v).
\end{equation*}
After a choice of Galerkin space $U_h= \operatorname{span}\{\phi_1, \dots, \phi_n\}$ and test space $V_h=U_h$, so that the co-domain of the discrete residual is $V_h^*$, which we identify with $\mathbb R^n$ through the basis, the discrete residual is
\begin{equation*}
   R_h(\textbf u, \rho) = A \mathbf u - \mathbf{f}, 
   \quad 
   A_{ij}=\int_\Omega \nabla \phi_j\nabla \phi_i\,\mathrm dx
   \quad
   \mathbf f_i = \rho(\phi_i),
   \quad
   i,j=1,\dots,n.
\end{equation*}

\paragraph{Interpolated Neural Operators} 
To use the discretized residual equation in the context of neural operators, we use an \emph{interpolation operator} $I_h$ mapping a neural operator output into $U_h$. This results in what we call an \emph{interpolated neural operator}
\begin{equation}\label{eq:interpolated_neural_operator}
   G_{\theta, h} = I_h \circ G_\theta: \Pi \to U_h, 
   \quad
   \rho \mapsto I_h\left( G_\theta(\rho) \right) = \sum_{i=1}^n  \mathbf u(\theta, \rho)_i \phi_i.
\end{equation}
The choice of the space $U_h$ and the interpolation operator $I_h$ should be seen as a modeling choice that depends on the concrete PDE problem at hand. We use Lagrange interpolation operators and Lagrange finite element space in our applications; the interpolation is part of the loss rather than of the model, and ties neither the operator to a fixed resolution nor its accuracy beyond the interpolation error, see Appendix~\ref{appendix:interpolated_neural_operators_details}.

\subsection{Physics-Informed Loss Functions}

\paragraph{Least Squares Loss}
The discrete residual $R_h$ and the interpolated neural operator $G_{\theta,h}$ directly yield a physics-informed loss 
\begin{equation}\label{eq:general_ls_loss}
   L_{\textup{LS}}(\theta)
   =
   \mathbb E_{\rho\sim\mu}\left[ \frac12 \left\| R_h\big(\mathbf u(\theta,\rho);\rho\big) \right\|^2 \right].
\end{equation}
Minimization of $L_{\textup{LS}}$ is achieved when $G_{\theta, h}(\rho)$ equals the Galerkin solution $u_h(\rho)$. In this form however, $L_{\textup{LS}}$ is ill-conditioned and difficult to train. PINO \citep{li2024physics} losses are of a similar form, and while physics-informed DeepONet \citep{wang2021learning} is based on an coordinate-based formulation, both share the ill-conditioning with this formulation.

\paragraph{Preconditioned Least Squares Loss}
Let $P$ be an invertible matrix, and $B$ be a positive definite and symmetric matrix. We define the \emph{preconditioned least-squares loss} as
\begin{equation}\label{eq:general_pls_loss}
   L_{\textup{PLS}}(\theta)
   =
   \mathbb E_{\rho\sim\mu}\left[ \frac12 \left\| P\,R_h\big(\mathbf u(\theta,\rho);\rho\big) \right\|_B^2 \right],
   \qquad \|r\|_B^2 = r^\top B\, r.
\end{equation}
Since $P$ is invertible and $B$ is positive definite, the integrand of \eqref{eq:general_pls_loss} vanishes exactly where that of \eqref{eq:general_ls_loss} does. Hence \emph{preconditioning changes the geometry of the loss landscape, but not its minimizers}. We stress again that our construction is architecture agnostic, and adds no additional cost at inference, as it only concerns the loss function.

As $\|Pr\|_B = \|r \|_{P^TBP}$, the split into the pair $(P,B)$ is redundant. We keep it, as it is a convenient way to specify our choices. Let us illustrate the case where $R_h(\mathbf u(\theta, \rho);\rho)= A\mathbf u(\theta, \rho) - \mathbf f(\rho)$ for a positive definite and symmetric matrix $A$, and assume the condition number of $A$ scales like $h^{-2}$, as is the case for Poisson's equation, a longer discussion with complete proofs for the conditioning claims is given in Appendix \ref{appendix:least-squares-preconditioning}. Consider the following choices
\begin{itemize}
   \item $(P,B) = (I,I)$. This recovers $L_{\textup{LS}}$ and induces $\mathcal O(h^{-4})$ scaling of the condition number.
   \item $(P,B) \approx (A^{-1}, M)$, where $M$ is of conditioning $\mathcal O(1)$, as is the case for the identity or the mass matrix. In this case, the overall conditioning is $\mathcal O(1)$ and we recover data-driven training in the limit of $P=A^{-1}$. 
   \item $(P,B) \approx (I, A^{-1})$. This requires $A$ to be symmetric and positive definite and recovers energy formulations. The conditioning is $\mathcal O(h^{-2})$ and the details are given in the appendix.
\end{itemize}

\section{Results}\label{sec:numerical_results}

We evaluate preconditioned physics-informed training on three operators: the solution operator of Poisson's equation on a structured grid, a time-stepping operator of the Allen--Cahn equation on a structured grid, both with an FNO, and stationary Stokes flow on an unstructured mesh, with GAOT \citep{gaot}. On the structured grids the preconditioner is geometric multigrid; on the unstructured mesh a block-diagonal preconditioner relying on algebraic multigrid. We compare against supervised training, PINO \citep{li2024physics} and physics-informed DeepONet \citep{wang2021learning}. The precise training protocol is provided in Appendix~\ref{appendix:details_poisson}. Table~\ref{tab:summary} summarizes our findings. Preconditioned physics-informed training matches supervised training on Poisson, trails it slightly on Allen--Cahn, and surpasses it on Stokes, at an overhead of under $10\%$ to about a third of the time per epoch. The baselines are far less accurate on Poisson and Stokes. On Allen--Cahn the picture differs: PINO comes within a factor of two of our loss, while physics-informed DeepONet fails altogether at $90\%$ error. The better performance of PINO is expected here, since the time-stepping operator is far better conditioned than a stationary elliptic problem. Nevertheless, the preconditioner leads to faster and more stable convergence, as can be seen in Figure \ref{fig:ac-results}. 

\begin{table}[t]
  \caption{We summarize our main results across the three PDE instances tested: Poisson, Allen--Cahn and Stokes equation. We report mean relative $L^2$ errors on the test set in percent over three seeds with half-range. For Stokes we report velocity and pressure error separately. Poisson and Allen--Cahn use a uniform grid with an FNO; for Stokes we employ an unstructured grid on a domain with a circular hole and use the GAOT operator. The last column is the wall-clock time of one training epoch on a single RTX 4090. Bold marks
  the most accurate physics-informed approach; $L_\textup{data}$ requires labelled solutions and is
  included as the supervised reference.}
  \label{tab:summary}
  \centering
  \footnotesize
  \setlength{\tabcolsep}{4.5pt}
  \begin{tabular}{llccccr}
    \toprule
    \multirow{2}{*}{PDE} & \multirow{2}{*}{loss} & physics- & \multirow{2}{*}{derivatives}
      & \multicolumn{2}{c}{rel.\ test $L^2$ [\%]} & time \\
    \cmidrule(lr){5-6}
      & & informed & & $u$ & $p$ & [s] \\
    \midrule
    \multirow{4}{*}{Poisson}
      & $L_\textup{PLS}$   & yes & FEM & $\mathbf{5.1 \pm 0.8}$  & ---            & 1.2 \\
      & $L_\textup{data}$  & no  & --- & $4.7 \pm 0.8$  & ---            & 1.1 \\
      & PINO               & yes & FD  & $33.1 \pm 1.1$ & ---            & 1.1 \\
      & PI-DeepONet        & yes & AD  & $21.4 \pm 4.7$ & ---            & 3.1 \\
    \midrule
    \multirow{4}{*}{Allen--Cahn}
      & $L_\textup{PLS}$   & yes & FEM & $\mathbf{3.2 \pm 0.3}$  & ---            & 46 \\
      & $L_\textup{data}$  & no  & --- & $2.0 \pm 0.2$  & ---            & 40 \\
      & PINO               & yes & FD  & $5.3 \pm 1.6$  & ---            & 40 \\
      & PI-DeepONet        & yes & AD  & $90.0 \pm 1.7$ & ---            & 8.2 \\
    \midrule
    \multirow{3}{*}{Stokes}
      & $L_\textup{PLS}$   & yes & FEM & $\mathbf{1.6 \pm 0.2}$  & $\mathbf{0.85 \pm 0.04}$ & 3.9 \\
      & $L_\textup{data}$  & no  & --- & $3.3 \pm 0.2$  & $1.79 \pm 0.16$ & 2.9 \\
      & PI-DeepONet        & yes & AD  & $39.7 \pm 2.1$ & $36.7 \pm 0.6$ & 6.0 \\
    \bottomrule
  \end{tabular}
\end{table}

\subsection{Poisson Equation}\label{sec:poisson}
We approximate the solution operator $G: \rho \mapsto u$ of the homogeneous Poisson equation
\begin{align}\label{eq:poisson_equation}
      -\Delta u &= \rho \quad \text{in }\Omega,
      \quad
      u = 0 \quad \text{on }\partial\Omega,
\end{align}
on the domain $\Omega = [0,1]^2$ and the forces $\rho$ are randomly drawn truncated sine series with a parameter $K$ that controls the number of active modes. The benchmarks of Table~\ref{tab:summary} are produced with $K=10$. We discretize in the space $U_h = Q_1^h$ of bilinear finite elements on a uniform $65\times 65$ grid which results in the discrete residual $R_h(\mathbf u, \rho)= A\mathbf u - \mathbf f$ with $A$ being the stiffness matrix and $\mathbf f$ the discrete right-hand side, see Section~\ref{sec:physics_informed_training} for details. The neural operator is an FNO whose output grid coincides with the finite element nodes. As preconditioner we take a single dyadic geometric multigrid V-cycle for $A$, the approximate inverse setting $(P,B)\approx (A^{-1}, I)$. Appendix~\ref{appendix:details_poisson} collects the full experiment protocol, below we report additional experiments besides the benchmark results in Table~\ref{tab:summary}.

\paragraph{Ill-Conditioning Prevents Physics-Informed Training}
In this experiment, we show that the ill-conditioning of the stiffness matrix $A$ is the reason physics-informed training fails without a preconditioner. To demonstrate this, we use the physics-informed least-squares loss with $B=I$ and a parametrized preconditioner
\begin{equation}
   L^t_\textup{PLS}(\mathbf u) = \frac12 \|P_t[A\mathbf u - \mathbf f] \|^2_2, \quad P_t = (1-t)I + t A^{-1}.
\end{equation}

The preconditioner $P_t$ is a convex mixing between no preconditioning $P_0=I$ and data-driven training $P_1 = A^{-1}$. This choice is possible on a small $65\times 65$ grid, and serves to illustrate the effect of preconditioning on the training dynamics. We monitor the relative $L^2(\Omega)$ error averaged over the validation set during training for a number of $t$-values, and compare to the single dyadic V-cycle geometric multigrid preconditioner. The results are reported in Figure~\ref{fig:pls-overlay}, where we additionally report the condition number of the Hessian $H_t=\nabla_u^2 L_\textup{PLS}^t = AP_t^2A$. For this experiment, the dataset uses $K=4$.

\begin{figure}[t]
  \centering
  \includegraphics[width=0.49\linewidth]{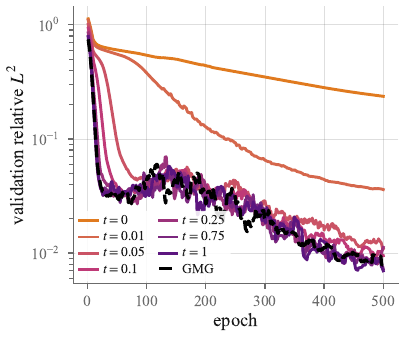}
  \hfill
  \includegraphics[width=0.49\linewidth]{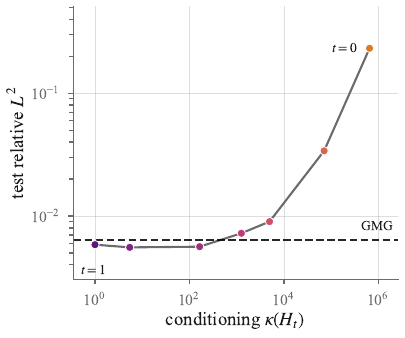}
  \caption{Preconditioned least-squares training for the Poisson equation. \textbf{Left:} validation relative $L^2$ vs.\ epoch as the preconditioner $P_t$ is swept from $I$ to $A^{-1}$, together with the geometric multigrid preconditioner \textbf{Right:} test error, at the best-validation checkpoint, of the $P_t$ preconditioned sweeps against the condition number $\kappa(H_t)$ of the preconditioned loss Hessian; the dashed line is the geometric multigrid preconditioner, which has no finite $\kappa(H_t)$.}
  \label{fig:pls-overlay}
\end{figure}

\begin{wrapfigure}{r}{0.40\textwidth}
   \vspace{-\intextsep}
   \includegraphics[width=\linewidth]{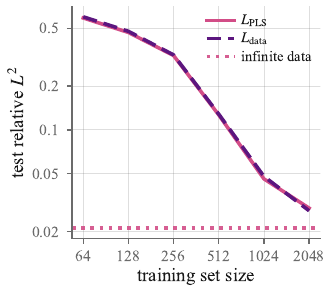}
   \vspace{-1em}
   \caption{Test error vs.\ training set size at a fixed optimization budget for finite and infinite datasets.}
   \label{fig:data-scaling}
\end{wrapfigure}
Figure~\ref{fig:pls-overlay} shows that preconditioning is essential to obtain efficient physics-informed training. The un-preconditioned run ($t=0$) converges to $\approx 30\%$ relative $L^2$ error after 500 epochs of training with Adam, while data-driven training ($t=1$) lands at below $1\%$ error for the same amount of training epochs. Intermediate values of $t$ interpolate the training dynamics, which leads us to conclude that ill-conditioning prevents efficient physics-informed neural operator training, and preconditioning is an effective strategy to mitigate this. This is further supported by tracking the condition numbers of $H_t$ in dependence on $t$. For $t=0$, we have $\kappa(H_t)=6.5e+05$ and of course for data-driven training it holds $\kappa(H_1)=1$. Interestingly, as visualized on the right of Figure \ref{fig:pls-overlay} below a conditioning of $\approx 10^3$--$10^4$ (corresponding to $t\in[0.05, 0.1]$) we observe diminishing returns for making the preconditioners more accurate. We attribute this to ill-conditioning induced by the neural network parametrization, which is not resolved in our analysis. This suggests, that a highly-accurate preconditioner may not be needed for accurate physics-informed training, and in fact, the geometric multigrid preconditioner performs virtually identical to data-driven training.

\paragraph{Training at The Infinite Data Limit} Physics-informed training is not bound to a dataset: we may draw a fresh batch of samples for every parameter update, so that no sample is ever revisited. We refer to this as training at the infinite data limit. To illustrate the benefit of training in that regime, we fix the optimization budget at $32{,}000$ updates at batch size $32$ and vary only the size of the training set, for both $L_\textup{PLS}$ and supervised training on the $K=10$ data set, see Figure~\ref{fig:data-scaling}. At every finite size up to the maximum dataset size considered at $2048$ the two are indistinguishable. This means at this budget the accuracy is limited by the amount of data, not by the choice of loss. The infinite dataset approach yields the best result at $2.1\%$ relative $L^2$ error versus the $2.8\%$ error of the data-driven approach on the largest dataset considered.

\subsection{Allen-Cahn Equation}\label{sec:allen_cahn}
We now learn a time-stepping operator of a convex-concave splitting scheme \citep{eyre1998unconditionally} for the Allen-Cahn equation with homogeneous Dirichlet boundary conditions
\begin{align}\label{eq:allen_cahn}
   \begin{split}
      \partial_t u = \Delta u + \varepsilon^2 u(1-u^2), \quad u_{|\partial\Omega}=0, \quad u_{|t=0} = u_0
   \end{split}
\end{align}
on the domain $\Omega=[0,1]^2$ and time interval $I=[0,1]$ with $\varepsilon=32$. That is for a time step $\tau$ and given $\rho=u_k$ the operator $G(u_k)=u_{k+1}$ satisfies
\begin{equation}
   \tfrac1\tau (u_{k+1} - u_k, v) + (\nabla u_{k+1}, \nabla v) + \varepsilon^2 (u_{k+1}^3 - u_k, v) = 0, \quad \text{for all }v\in H^1_0(\Omega)
\end{equation}
where we use $\tau=0.01$ and the sine-series dataset with $K=4$ as initial conditions. As neural operator, we use an FNO and for the spatial discretization we use a regular $129\times 129$ grid, hence the finite element space $Q_1^h$ as in the case of Poisson's equation which leads to the discrete residual ($\rho=u_k$)
\begin{equation}\label{eq:convex_concave_time_stepper}
   0 
   =
   M\left[ \frac{\mathbf u_{k+1} - \mathbf u_k}{\tau} + \varepsilon^2\left( \mathbf u_{k+1}^3 - \mathbf u_k \right) \right]
   +
   A \mathbf u_{k+1}
   =:
   R_h(\mathbf u_{k+1}, \mathbf u_k), 
\end{equation}
where $A_{ij}=\int_\Omega \nabla \phi_j\nabla \phi_i\,\mathrm dx$ is the stiffness matrix, and $M=\int_\Omega \phi_j \phi_i\,\mathrm dx$ is the mass matrix. We train the interpolated operator $G_{\theta, h}$ on 10 timesteps between $t=0$ and $t=0.1$. In Table~\ref{tab:summary}, we report the test error at $t=0.1$ and in Figure~\ref{fig:ac-results} we investigate roll-out stability. The problem is nonlinear and we discuss preconditioning below; the full details are given in Appendix~\ref{appendix:details_allen_cahn}.

\paragraph{Preconditioned Autoregressive Loss Functions}

Given an initial condition $\mathbf u_0$, we denote the reference roll-out by $\mathbf u_0, \mathbf u_1, \dots, \mathbf u_T$ which is produced from \eqref{eq:convex_concave_time_stepper} with a finite element method. The roll-out of the FNO is denoted by $G_{\theta,h}^i(\mathbf u_0)$, and when we write $\hat{\mathbf u}_i$, we mean $G_{\theta,h}^i(\mathbf u_0)$ with \emph{detached gradients}. The data-driven and the least squares loss on a single trajectory are
\begin{equation}
   L_{\textup{data}}(\theta) = \frac12 \sum_{k=0}^{R-1}\left\| G_{\theta, h}(\hat{\mathbf u}_k) - \mathbf u_{k+1} \right\|^2,
   \quad 
   L_{\textup{LS}}(\theta)=\frac12\sum_{k=0}^{R-1}\left\| R_h( G_{\theta,h}(\hat{\mathbf u}_k), \hat{\mathbf u}_k) \right\|^2
\end{equation}
where we use $\hat{\mathbf u}_k$ to avoid backpropagation through the graph of $G_{\theta,h}^k$---the $k$-th power of the FNO and $R\in\mathbb N$ is a hyperparameter of the training setup, we use $R=10$, hence train for 10 timesteps between $t=0$ and $t=0.1$. The preconditioned least-squares loss is based on an approximate inverse of the Jacobian of $\phi(\cdot, \mathbf u_k)$ realized via geometric multigrid
\begin{equation}
   L_{\textup{PLS}}(\theta) = \frac12\sum_{k=0}^{R-1}\left\| P[R_h(G_{\theta,h}(\hat{\mathbf u}_k), \hat{\mathbf u}_k)] \right\|^2,
   \quad
   P\approx \left( (3\varepsilon^2 + \tau^{-1})M + A \right)^{-1}.
\end{equation}
For a detailed derivation and the additional baselines PI-DeepONet and PINO, see Appendix~\ref{appendix:details_allen_cahn}.

\begin{figure}[t]
  \centering
  \includegraphics[width=0.49\linewidth]{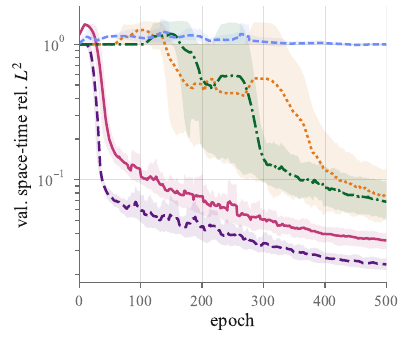}
  \hfill
  \includegraphics[width=0.49\linewidth]{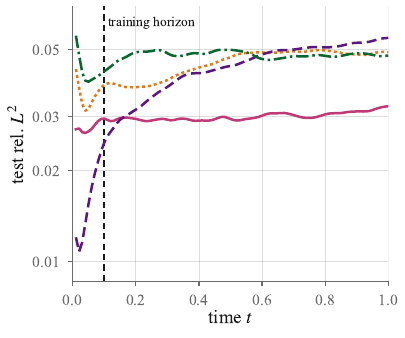}
  \par\vspace{2pt}
  \includegraphics[width=\linewidth]{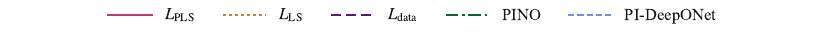}
  \caption{Allen--Cahn. \textbf{Left:} validation space-time relative $L^2$ (pooled over the validation set) vs.\ epoch; mean over three seeds with min--max band, smoothed by a centered 15-epoch moving average. \textbf{Right:} test relative $L^2$ per time step of a free roll-out to $T=1$, mean over test samples of the per-sample error, for the best seed of each loss (selected on validation); PI-DeepONet, whose error stays between $0.7$ and $1$, is omitted. The dashed line marks the training horizon $t=0.1$.}
  \label{fig:ac-results}
\end{figure}

\paragraph{Discussion} For this example, the results are qualitatively different: The PINO baseline and our un-preconditioned least squares baseline do produce workable predictions here although they are still trailing the data-driven and the preconditioned least-squares loss by a factor of at least two. This is not in contradiction to the remaining results, as time stepping operators are typically much tamer in their conditioning. We provide a detailed analysis in Appendix~\ref{appendix:details_allen_cahn}. Nevertheless, preconditioning still accelerates convergence and leads to an visible improvement in accuracy over the baselines and almost matches the data-driven benchmark. Interestingly, for roll-out beyond the training horizon all physics-informed losses are remarkably stable, see Figure~\ref{fig:ac-results}. We conjecture that this is due to the self-consistency of these losses---the next predicted step must be correct only with respect to the previous one, whereas data-driven training matches the whole trajectory at once.

\subsection{Stokes Equation}\label{sec:stokes}
We learn the solution operator $G: \rho \mapsto (u, p)$ of the stationary Stokes equations
\begin{align}\label{eq:stokes_equation}
   \begin{split}
      -\Delta u + \nabla p &= \rho, \qquad \nabla\cdot u = 0 \quad \text{in } \Omega,
      \\
      u &= 0 \quad \text{on } \partial\Omega, \qquad \textstyle\int_\Omega p\,\mathrm dx = 0,
   \end{split}
\end{align}
and the same family of sine-series body forces as in the Poisson experiment ($K = 10$). Stokes system is a saddle point problem and requires inf-sup stable pairs. We use Taylor--Hood $P_2/P_1$ and discretization yields an indefinite system matrix $A$ that leads to a condition number of $\mathcal O(h^{-4})$ in the least squares loss without preconditioning. Our domain is a unit square with a circular hole of radius $0.14$ centred at $(0.40, 0.50)$ discretized by an unstructured triangulation with $3998$ velocity and $1035$ pressure nodes ($9031$ degrees of freedom). As neural operator on the unstructured grid, we use GAOT \citep{gaot} and rely on algebraic multigrid \citep{naumov2015amgx} for the block-diagonal preconditioners described below. The full details and training protocol are documented in Appendix \ref{appendix:details_stokes}.

\paragraph{Block-diagonal Preconditioners} After discretization with the inf-sup stable Taylor-Hood pair, Stokes has the following block-structure
\begin{align*}
   A = 
   \begin{pmatrix}
      K & D^\top \\ 
      D & 0   
   \end{pmatrix},
   \quad
   \mathbf u(\theta, \rho) = 
   \begin{pmatrix}
      \mathbf w(\theta, \rho) \\
      \mathbf p(\theta, \rho)
   \end{pmatrix},
   \quad
   \mathbf f(\rho) =
   \begin{pmatrix}
      \mathbf g(\rho) \\
      0
   \end{pmatrix},
\end{align*}
where $K$ is symmetric and positive definite and $D^\top$ is injective, so that $A$ is symmetric but indefinite. The Schur complement is $S=DK^{-1}D^\top$, and the classical block-diagonal preconditioner is $P_S = \operatorname{diag}(K^{-1}, S^{-1})$; we approximate $K^{-1}$ by one algebraic multigrid V-cycle and $S^{-1}$ by the inverse of the lumped pressure mass matrix, $\hat S^{-1} = \omega_S\,\operatorname{diag}(M_p)^{-1}$. The scaling is dictated by the spectral equivalence $S \simeq M_p$, making $\omega_S>0$ a free parameter of $P_S$. We select it on the validation set and find a broad optimum over $\omega_S\in[4,16]$, see Appendix~\ref{appendix:details_stokes}. Its key property is that $P_SA$ has exactly three distinct eigenvalues $\{ 1, \frac{1+\sqrt 5}{2}, \frac{1-\sqrt 5}{2} \}$ \citep{murphy2000note}. In the Euclidean norm, however, $P_SA$ remains ill-conditioned, since it is not normal; in particular, $P_S$ is not an approximate inverse of $A$. We therefore use the pair $(P,B)=(I, P_S)$ in the preconditioned least-squares loss
\begin{align}
   \begin{split}
   L_{\textup{PLS}}(\theta)
   &=
   \tfrac12\,\mathbb E_{\rho\sim\mu}\big[
      \| A\mathbf u(\theta, \rho) - \mathbf f(\rho) \|_{P_S}^2
      \big]
   \end{split}
\end{align}
On a single sample, the Hessian of $L_{\textup{PLS}}$ with respect to $\mathbf u$ is $AP_SA$, whose condition number 
scales like $\kappa(P_S) = \mathcal O(h^{-2})$ versus $\mathcal O(h^{-4})$ of $L_{\textup{LS}}$ which is sufficient in our numerical experiments.

\paragraph{Discussion}
The Stokes experiment tests the method on an unstructured mesh where we replace the FNO by GAOT \citep{gaot} and use algebraic multigrid. Despite the $\mathcal O(h^{-2})$ conditioning that the block-diagonal preconditioner leaves in place, training is reliable and accurate: $L_\textup{PLS}$ reaches $1.6\%$ velocity and $0.85\%$ pressure error and thereby surpasses supervised training, at $3.3\%$ and $1.8\%$, by a factor of two on both fields, with a comparable cost per epoch. We have no clear explanation for this. It is, however, not excluded by construction: for Poisson, $L_\textup{PLS}$ is an approximation of the supervised loss, as $P\approx A^{-1}$, whereas the block-diagonal $P_S$ is not an approximate inverse. Without preconditioning, physics-informed training fails on this problem: the least-squares loss stalls at $30.6\%$ velocity and $11.4\%$ pressure error, and its error is smooth and large-scale---the low-frequency modes that converge last on such an ill-conditioned problem, see Figure~\ref{fig:stokes-samples}. PI-DeepONet, fails to train and ends at $39.7\%$ and $36.7\%$. Details are provided in Appendix~\ref{appendix:details_stokes}.

\begin{figure}[t]
  \centering
  \includegraphics[width=0.49\linewidth]{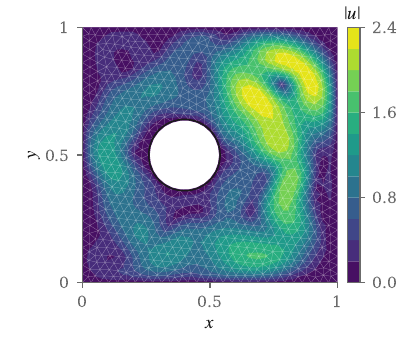}
  \includegraphics[width=0.49\linewidth]{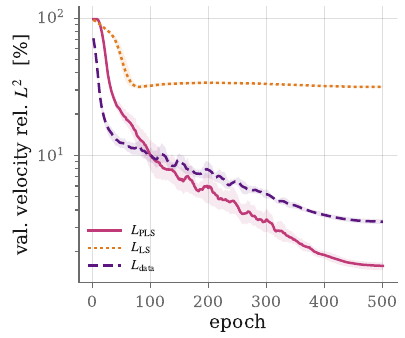}
  \caption{Stokes flow past an obstacle. \textbf{Left:} velocity magnitude of one reference solution to illustrate the domain, triangulation and zero Dirichlet (no-slip) boundary conditions. \textbf{Right:} relative $L^2$ error of the velocity on the validation set vs.\ epoch. We report the mean over three seeds and shade the min--max band. Preconditioned training separates from supervised training after roughly a hundred epochs and stays below it; the unpreconditioned control $L_\textup{LS}$ fails to converge and stagnates at around $30\%$ error.}
  \label{fig:stokes-results}
\end{figure}

\section{Conclusion and Limitations}
 Physics-informed operator learning has so far trailed its data-driven counterpart, largely due to the ill-conditioning induced by PDE operators. We show that classical preconditioners mitigate this when used inside a least-squares loss. This only changes the loss function, is agnostic to the neural operator architecture and incurs no overhead at inference. Across the Poisson, Allen–Cahn and Stokes equations, on structured and unstructured grids with FNO and GAOT backbones, the resulting label-free training matches or closely approaches supervised training. The main limitation is that the method relies on an efficient preconditioner for the problem class at hand; for problems where such preconditioners are not readily available, such as convection-dominated or high-frequency wave problems, this opens a direction for future research.

\subsection*{AI use statement}

In this work, we used generative AI tools for coding assistance, writing assistance, and mathematical reasoning assistance. We have reviewed all AI-assisted work. All LLM-generated output was verified and tested for correctness by the authors. We take responsibility for the final content of this work, including text, claims or artifacts produced with the aid of generative AI.

\subsection{Reproducibility}
The code to reproduce the experiments can be found at \url{https://github.com/camlab-ethz/TensorPILS}.

\bibliography{iclr2027_conference}
\bibliographystyle{iclr2027_conference}

\appendix

\section{Details on Interpolated Neural Operators}\label{appendix:interpolated_neural_operators_details}
\begin{wrapfigure}{r}{0.40\textwidth}
   \vspace{-\intextsep}
   \includegraphics[width=\linewidth]{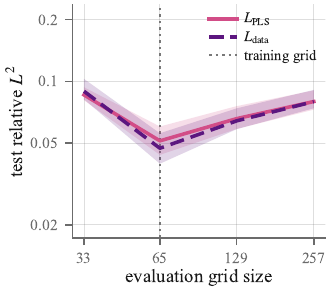}
   \vspace{-1em}
   \caption{Test error of the Poisson problem of Section~\ref{sec:poisson}, trained on the $65\times 65$ grid and evaluated on different resolutions. Mean over three seeds with min--max band. Both losses transfer in the same way across different grids.}
   \label{fig:resolution}
\end{wrapfigure}
Interpolation into a finite dimensional space $U_h$ via an interpolation operator $I_h$ may appear to tie neural operators to a fixed resolution, which would forfeit the discretization invariance that motivates operator learning in the first place \citep{kovachki2023neural}. This is not the case, as the interpolation can be understood as a part of the loss, not of the model. The trained neural operator $G_\theta$ may be evaluated at any resolution, with or without interpolation. However, the choice of $h$ does determine the accuracy the training can achieve, as features smaller than $h$ cannot be resolved. This is happens also in data-driven learning, where labels produced by a solver on the same mesh inherit the same limitation, together with the discretization error of the labels. Empirically, this is demonstrated in Figure~\ref{fig:resolution}. We give a theoretical discussion in the following, and to this end consider the three operators
\begin{align*}
   G &: \Pi \to U, \quad \rho \mapsto G(\rho),
   \\
   G_\theta &: \Pi \to U, \quad \rho \mapsto G_\theta(\rho),
   \\
   G_{\theta, h} &: \Pi \to U_h, \quad \rho \mapsto I_h\big( G_\theta(\rho) \big),
\end{align*}
where $G$ is the true underlying operator we are interested in, $G_\theta$ a neural surrogate with fixed $\theta$, and $G_{\theta,h}$ the interpolated neural operator. The operator $I_h$ maps into the finite-dimensional space $U_h$; it is typically defined on a subspace of $U$ only, as nodal approximation requires point evaluations, which are not available on $U=H^1(\Omega)$. A triangle inequality reveals the restriction on the accuracy obtainable by $G_{\theta,h}$: let $\rho\in\Pi$ be fixed, then
\begin{equation}\label{eq:appendix_interpolation_triangle}
   \big\| G(\rho) - G_{\theta,h}(\rho) \big\|_U
   \leq
   \underbrace{\big\| G(\rho) - G_\theta(\rho) \big\|_U}_{\text{network error}}
   +
   \underbrace{\big\| G_\theta(\rho) - G_{\theta,h}(\rho) \big\|_U}_{\text{interpolation error}}.
\end{equation}
If---as in our applications---$I_h$ is nodal interpolation into a $P_1$ space (or $Q_1$, $P_2$, \dots), the interpolation error in $L^2(\Omega)$ scales with $h^2$,
\begin{align*}
   \big\| G_\theta(\rho) - G_{\theta,h}(\rho) \big\|_{L^2(\Omega)}
   &\lesssim
   h^2\,\big| G_\theta(\rho) \big|_{H^2(\Omega)},
   \\
   \big\| G_\theta(\rho) - G_{\theta,h}(\rho) \big\|_{H^1(\Omega)}
   &\lesssim
   h\,\big| G_\theta(\rho) \big|_{H^2(\Omega)}.
\end{align*}

So for fixed $h$, we get an error floor in \eqref{eq:appendix_interpolation_triangle}. On our $65\times 65$ grids, this floor---at $h^2 = 64^{-2}\approx 2.4\cdot 10^{-4}$---is of the order $10^{-4}$ for $L^2$, well below all errors we measured.

Note that similar errors are present in data-driven neural operator training, where the data is produced with classical methods and frequently lies in a finite-dimensional space $U_h$ itself, which might even be the space used for physics-informed training. Figure~\ref{fig:resolution} confirms this empirically: evaluating the trained operators of Section~\ref{sec:poisson} on grids from $33\times 33$ to $257\times 257$, the preconditioned and the supervised arm degrade alike away from the training resolution, and neither is tied to it by more than the accuracy its training resolution could ask for.

\section{The Different Options of Preconditioning}
Besides the general applicability of least-squares based preconditioning---it doesn't require a variational formulation and works for both linear and nonlinear equations---we want to illustrate why it is our method of choice. To this end, we discuss the available options and compare them. As we aim to mitigate the ill-conditioning in the output layer of the neural operator, and the latter is nonlinear, not all preconditioning approaches of numerical linear algebra apply directly. The advantages of least squares based preconditioning are:
\begin{itemize}
   \item It incurs no cost at inference, as the preconditioner is part of the loss, compare to Appendix~\ref{appendix:preconditioning-through-architecture}.
   \item It poses no restriction on the choice of preconditioner, see Appendix~\ref{appendix:preconditioning-through-inner-product}
   \item It works for general PDEs. Identically to the other approaches, the choice of preconditioner remains PDE-dependent, summarized again in Appendix~\ref{appendix:least-squares-preconditioning}.
\end{itemize}

From here on, we specialize to the case of quadratic loss functions and assume that our operator learning setup is to learn a solution operator. On a single sample ($\mathbf f = \mathbf f(\rho) $), the least-squares and the variational loss then are 
\begin{align*}
   L_{\textup{LS}}(\theta) 
   &=
   \frac 12 \| A \mathbf u(\theta) - \mathbf f \|^2,
   \\
   L_{\textup{Var}}(\theta)
   &=
   \frac12 \mathbf u(\theta)^\top A \mathbf u(\theta) - \mathbf f^\top \mathbf u(\theta).
\end{align*}
In the case of $L_{\textup{Var}}$, we furthermore assume that $A$ is positive definite and symmetric. In the above, $\mathbf u(\theta)$ can be interpreted as an interpolated neural operator, see Appendix~\ref{appendix:interpolated_neural_operators_details}. We suppress its dependence on $\rho$ for brevity.

\subsection{Least Squares Preconditioning}\label{appendix:least-squares-preconditioning}
In this setting, the preconditioned least-squares loss is
\begin{equation*}
   L_{\textup{PLS}}(\theta)
   =
   \frac12\| P[A\mathbf u(\theta) - \mathbf f] \|_B^2,
\end{equation*}
where $P$ is bijective and linear, and $B$ is positive definite and symmetric, hence induces a norm. First order optimality conditions read
\begin{equation*}
   \nabla L_{\textup{PLS}}(\theta) = 0
   \quad \Leftrightarrow \quad
   J(\theta)^\top [A^\top P^\top B P A\, \mathbf u(\theta)] = J(\theta)^\top [A^\top P^\top B P \mathbf f],
\end{equation*}
where $J(\theta)$ is the Jacobian of $\theta \mapsto \mathbf u(\theta)$. Up to the Jacobian, these are the normal equations of the left preconditioned system $PA\mathbf u = P\mathbf f$ in the $B$-inner product. The Gauss-Newton matrix is
\begin{equation*}
   \operatorname{GN}[L_{\textup{PLS}}](\theta)
   =
   J(\theta)^\top [A^\top P^\top B P A] J(\theta).
\end{equation*}
The following Lemma quantifies the conditioning of the Gauss-Newton matrix up to the influence of the Jacobians.
\begin{lemma}\label{lemma:conditioning}
   Let $B\in\mathbb R^{n\times n}$ be positive definite and symmetric. Let $A, P\in\mathbb R^{n\times n}$ be invertible. Then we can estimate
   \begin{align}
      \kappa(A^\top P^\top B P A)
      &\leq
      \kappa(PA)^2\kappa(B)
      \label{eq:lemma_bound_least_squares}
      \\
      \intertext{and, if $A$ is in addition symmetric,}
      \kappa(ABA)
      &\leq
      \kappa(B)\left( \frac{\max |\lambda(BA)|}{\min|\lambda(BA)|} \right)^2.
      \label{eq:lemma_bound_norm_weight}
    \end{align}
\end{lemma}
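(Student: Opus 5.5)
Both bounds come from the submultiplicativity of the spectral condition number, $\kappa(XY)\le\kappa(X)\kappa(Y)$, valid for invertible square matrices because $\|XY\|\le\|X\|\|Y\|$ and $\|(XY)^{-1}\|\le\|Y^{-1}\|\|X^{-1}\|$. Throughout, $\kappa$ is the spectral (2-norm) condition number. For symmetric positive definite matrices such as $A^\top P^\top BPA$ and $ABA$ it equals the ratio of extreme eigenvalues, so the left-hand sides are the quantities of interest. Both matrices are symmetric positive definite as congruence transforms of $B$ by invertible matrices.

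For \eqref{eq:lemma_bound_least_squares}, I write $A^\top P^\top B P A = (PA)^\top B (PA)$ and apply submultiplicativity twice:
\[
\kappa\big((PA)^\top B (PA)\big)\le \kappa\big((PA)^\top\big)\,\kappa(B)\,\kappa(PA)=\kappa(PA)^2\kappa(B),
\]
using $\kappa(X^\top)=\kappa(X)$ because singular values are invariant under transposition. No further work is needed.

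For \eqref{eq:lemma_bound_norm_weight}, the aim is to express $ABA$ through $BA$. I factor $B=B^{1/2}B^{1/2}$ and set $C=B^{1/2}AB^{1/2}$, which is symmetric. Then
\[
ABA = B^{-1/2}\,(B^{1/2}AB^{1/2})\,(B^{1/2}AB^{1/2})\,B^{-1/2} = B^{-1/2} C^2 B^{-1/2}.
\]
Submultiplicativity together with $\kappa(B^{-1/2})^2=\kappa(B)$ gives $\kappa(ABA)\le\kappa(B)\,\kappa(C^2)$.

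Since $C$ is symmetric, $\kappa(C^2)=\kappa(C)^2=\big(\max|\lambda(C)|/\min|\lambda(C)|\big)^2$. Moreover, $C=B^{1/2}(AB)B^{-1/2}$ is similar to $AB$. The spectrum of $AB$ equals that of $BA$, since $BA=(AB)^\top$ (using that $A$ and $B$ are symmetric), or alternatively since $XY$ and $YX$ always share eigenvalues. Hence $\lambda(C)=\lambda(BA)$, and the bound follows.

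The only point needing care is this last step. $BA$ is not symmetric, so its eigenvalue ratio is not its condition number. The symmetrization through $B^{1/2}$ is exactly what makes the eigenvalue ratio legitimate, because $C$ is a symmetric matrix with the same spectrum. I expect this to be the main, though mild, obstacle: making sure the similarity uses $B^{\pm1/2}$ so that the resulting matrix is genuinely symmetric.
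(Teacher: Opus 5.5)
Your proof is correct and follows the paper's argument essentially step for step: the first bound comes from submultiplicativity applied to $(PA)^\top B (PA)$, and the second from the symmetric matrix $B^{1/2}AB^{1/2}$ together with $ABA = B^{-1/2}(B^{1/2}AB^{1/2})^2B^{-1/2}$. The only cosmetic difference is that you identify the spectrum via similarity to $AB$ followed by $\lambda(AB)=\lambda(BA)$, whereas the paper writes $B^{1/2}AB^{1/2} = B^{-1/2}(BA)B^{1/2}$ directly.
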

\begin{proof}
    For \eqref{eq:lemma_bound_least_squares} we abbreviate $M = PA$. The condition number is submultiplicative and invariant under transposition, hence
    \begin{equation*}
        \kappa(A^\top P^\top B P A)
        =
        \kappa(M^\top B M)
        \leq
        \kappa(M^\top)\,\kappa(B)\,\kappa(M)
        =
        \kappa(PA)^2\kappa(B).
    \end{equation*}
    For \eqref{eq:lemma_bound_norm_weight} we define the matrix
    \begin{equation*}
        S = B^{1/2} A B^{1/2} = B^{-1/2}(BA)B^{1/2},
    \end{equation*}
    which is similar to $BA$ so the eigenvalues of $BA$ and $S$ are identical. As $S$ is symmetric, the condition number of $S$ is given through its eigenvalues
    \begin{equation*}
        \kappa(S) = \frac{\max|\lambda(S)|}{\min|\lambda(S)|} = \frac{\max |\lambda(BA)|}{\min|\lambda(BA)|}.
    \end{equation*}
   Since $ABA = B^{-1/2}S^2B^{-1/2}$ and $\kappa(B^{-1/2}) = \kappa(B)^{1/2}$, we conclude
    \begin{equation*}
        \kappa(ABA)
        =
        \kappa(B^{-1/2}S^2B^{-1/2})
        \leq
        \kappa(B)\,\kappa(S^2)
        =
        \kappa(B)\left( \frac{\max |\lambda(BA)|}{\min|\lambda(BA)|} \right)^2,
    \end{equation*}
    where $\kappa(S^2) = \kappa(S)^2$ holds because $S$ is symmetric.
\end{proof}

\subsection{Preconditioning Through Architecture Adaption}\label{appendix:preconditioning-through-architecture}
We now modify the neural network architecture via appending a non-trainable final preconditioning layer $\mathbf v(\theta) = P \mathbf u(\theta)$, where $P$ is bijective and linear. Note that this implies that $P$ \emph{must be applied at inference time}. The corresponding loss functions are
\begin{align*}
   \tilde L_{\textup{LS}}(\theta) 
   &=
   \frac 12 \| A \mathbf v(\theta) - \mathbf f \|^2
   =
   \frac 12 \| A  P \mathbf u(\theta) - \mathbf f \|^2
   ,
   \\
   \tilde L_{\textup{Var}}(\theta)
   &=
   \frac12 \mathbf v(\theta)^\top A \mathbf v(\theta) - \mathbf f^\top \mathbf v(\theta)
   =
   \frac12 \mathbf u(\theta)^\top P^\top A P \mathbf u(\theta) - \mathbf f^\top P \mathbf u(\theta)
   .
\end{align*}
First-order critical conditions read
\begin{align*}
   \begin{split}
      \nabla \tilde L_{\textup{LS}}(\theta) = 0 \quad  \Leftrightarrow \quad 
      &
      J(\theta)^\top [P^\top A^\top A P \mathbf u(\theta)] = J(\theta)^\top [P^\top A^\top \mathbf f],
      \\
      \nabla \tilde L_{\textup{Var}}(\theta) = 0 \quad  \Leftrightarrow \quad 
      &
      J(\theta)^\top [P^\top A P \mathbf u(\theta)] = J(\theta)^\top [P^\top \mathbf f],
   \end{split}
\end{align*}
where $J(\theta)$ denotes the Jacobian of $\theta \mapsto \mathbf u(\theta)$ at $\theta$. This shows that this approach is closest to two-sided preconditioning in numerical linear algebra. To assess the effect of $P$ on the ill-conditioning introduced through $A$, we compute the Gauss-Newton matrices
\begin{align*}
   \operatorname{GN}[ L_{\textup{LS}}](\theta)
   &=
   J(\theta)^\top [P^\top A^\top A P] J(\theta)
   \\
   \operatorname{GN}[ L_{\textup{Var}}](\theta)
   &=
   J(\theta)^\top[P^\top A P]J(\theta),
\end{align*}
Thus, ill-conditioning is mitigated through the choice $P\approx A^{-1}$ in the least squares case, and $P\approx A^{-1/2}$ for the variational formulation, or more generally, when the corresponding condition numbers are reduced.

The approach described above has been considered in the literature for PINN-type parametric learning problems, where parameter spaces are low dimensional.
\citet{bachmayr2026preconditioning} precondition through a modified neural network architecture and rely on the BPX preconditioner \citep{bramble1990parallel, brenner2008mathematical}, which is folded into their neural network ansatz. \citet{griese2025preconditioned} minimize a two-sided preconditioned residual for the stationary Stokes and Navier--Stokes equations, with the block factors obtained from Cholesky factorizations. 

\subsection{Preconditioning Through Change of Inner Product}\label{appendix:preconditioning-through-inner-product}
We modify $L_{\textup{Var}}$ by modifying the inner product
\begin{equation*}
   \hat L_{\textup{Var}}(\theta)
   =
   \frac12 (\mathbf u(\theta), A \mathbf u(\theta))_P - (\mathbf f, \mathbf u(\theta))_P
\end{equation*}
where $(\cdot, \cdot)_P=(\cdot, P\cdot)_{\mathbb R^n}$. We require both $A$ and $P$ to be symmetric and positive definite. Additionally, we need that $A$ and $P$ commute, which is necessary for the unique minimizers of $L_{\textup{Var}}$ and $\hat L_{\textup{Var}}$ to agree. Using $AP = PA$, the critical points satisfy
\begin{equation*}
   \nabla \hat L_{\textup{Var}}(\theta) = 0 
   \quad \Leftrightarrow \quad 
   J(\theta)^\top[PA\mathbf u(\theta)] = J(\theta)^\top [P \mathbf f]
\end{equation*}
which shows the similarity to left preconditioning in numerical linear algebra. The Gauss-Newton matrix is
\begin{equation*}
   \operatorname{GN}[\hat L_{\textup{Var}}](\theta)
   =
   J(\theta)^\top PA J(\theta)
\end{equation*}
and a good preconditioner reduces the condition number of $PA$, for instance $P\approx A^{-1}$. Note that the requirement $AP=PA$ is restrictive. For instance, it applies to preconditioners that are polynomial in $A$, but not to multigrid methods.

\subsection{Preconditioning in Parameter Space}
Finally, we consider preconditioning in parameter space, where we can assume a general setup. Consider minimization of a generic differentiable loss function $L:\Theta \to \mathbb R$. Using \emph{preconditioned gradient descent} to minimize $L$ is a general preconditioning strategy
\begin{equation*}
   \theta_{k+1} = \theta_k - \eta_k P_k \nabla L(\theta_k),
\end{equation*}
where $\eta_k>0$ is the learning rate and $P_k$ is the preconditioner, typically a linear and invertible matrix. Examples include Newton's method [nocedal], natural gradient descent [amari] and Adam [kingma] can be seen as a generalization of the above scheme. Preconditioning happens here in the parameter space $\Theta$ and requires specialized techniques to scale to large-scale networks [KFAC, SOAP, Shampoo]. We consider this approach orthogonal to the methods developed in this work.

\section{Details of the Poisson Equation Experiment}\label{appendix:details_poisson}

\paragraph{Operator} We consider the Poisson equation with homogeneous Dirichlet boundary conditions on the unit square $\Omega = [0,1]^2$,
\begin{equation}\label{eq:appendix_poisson}
   -\Delta u = \rho \quad \text{in } \Omega, \qquad u = 0 \quad \text{on } \partial\Omega.
\end{equation}
For every $\rho\in H^{1}_0(\Omega)^*$ the Poisson equation admits a unique weak solution $u\in H^1_0(\Omega)$. We learn the associated solution operator 
\begin{equation*}
   G: \Pi \to  U, \quad \rho \mapsto u,
\end{equation*}
with $U = V= H^1_0(\Omega)$ and $ \Pi = U^*$. Hence $ G$ is the inverse of the Laplace operator and the residual formulation of Section \ref{sec:physics_informed_training} is 
\begin{equation*}
   R:H_0^1(\Omega) \times H_0^1(\Omega)^* \to H^1_0(\Omega)^*,
   \quad
   R(u, \rho)(v) = \int_\Omega \nabla u \nabla v \,\mathrm dx - \rho(v).
\end{equation*}

\paragraph{Data Set} The dataset pairs $(\rho, u)$ are constructed through the eigenfunctions of the Laplacian, hence require no PDE solve. We draw coefficients $a_{ij}\sim\mathrm{Unif}[-1,1]$, independently for $i,j=1,\dots,K$, and set
\begin{align}\label{eq:appendix_poisson_data}
   \begin{split}
      \rho(x,y) &= \frac{\pi}{K^2}\sum_{i,j=1}^{K} a_{ij}\,(i^2+j^2)^{1/2}\,\sin(\pi i x)\sin(\pi j y),
      \\
      u(x,y) &= \frac{1}{\pi K^2}\sum_{i,j=1}^{K} a_{ij}\,(i^2+j^2)^{-1/2}\,\sin(\pi i x)\sin(\pi j y).
   \end{split}
\end{align}
The pairs satisfy $-\Delta u = \rho$ exactly and vanish at the boundary. The parameter $K$ controls the amount of frequencies in the data, and a larger $K$ constitutes a more challenging learning problem. Figure~\ref{fig:poisson_dataset} shows one draw at each value.

\begin{figure}[htb]
   \centering
   \includegraphics[width=\linewidth]{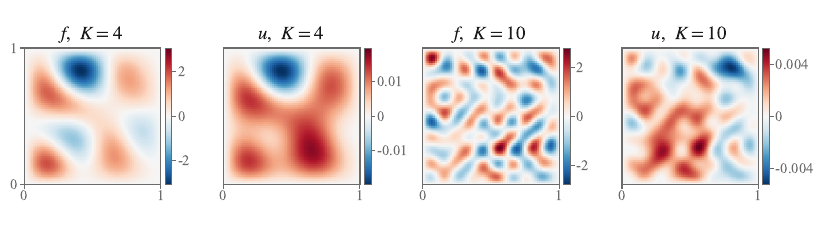}
   \caption{Samples from the Poisson data set, drawn from \eqref{eq:appendix_poisson_data} on a $65^2$ grid: source and solution at $K=4$ (left pair) and at $K=10$ (right pair). Note the differing magnitudes of $u$.}
   \label{fig:poisson_dataset}
\end{figure}

\paragraph{Discretization} We partition $\Omega$ into a uniform grid of $(n_x-1)^2$ squares with mesh size $h = 1/(n_x-1)$, and approximate in the space of continuous, piecewise \emph{bilinear} finite elements
\begin{equation*}
   U_h = Q_1^h = \big\{ v \in C(\overline\Omega) \;:\; v|_T \in \operatorname{span}\{1, x, y, xy\} \text{ for every cell } T, \quad v|_{\partial\Omega} = 0 \big\}.
\end{equation*}
We write $\phi_1, \dots, \phi_n$ for the associated global basis functions that are characterized through $\phi_i(x_j) = \delta_{ij}$ on the interior grid nodes $x_1, \dots, x_n$. This means that $n = \dim U_h = (n_x-2)^2$. We use $n_x = 65$, hence $n = 3969$. The stiffness and mass matrices are
\begin{equation}\label{eq:appendix_stiffness_mass}
   A_{ij} = \int_\Omega \nabla\phi_j \cdot \nabla\phi_i \,\mathrm dx,
   \qquad
   M_{ij} = \int_\Omega \phi_j\,\phi_i \,\mathrm dx,
\end{equation}
which we assemble with an exact quadrature rule. For the analytical sources $\rho$ we use its \emph{nodal approximation}, i.e., we replace $\rho$ by its interpolant $\sum_{j=1}^n \rho(x_j)\phi_j \in U_h$. The discretized forcing thus becomes
\begin{equation}\label{eq:appendix_consistent_rhs}
   \mathbf f_i = \mathbf f(\rho)_i = \sum_{j=1}^n M_{ij}\,\rho(x_j).
\end{equation}

\paragraph{Neural Operator} We use a FNO as implemented in the \texttt{neuraloperator} package \citep{kossaifi2026library}. The hyperparameters are collected in Table~\ref{tab:fno}. Because the FNO grid coincides with the finite element nodes, the interpolation into $ U_h$ of is the identity. Zero boundary conditions are hard-enforced by projecting to zero on $\partial\Omega$, so the model output lies in $U_h$ exactly and requires no boundary penalty.

\paragraph{PINO Baseline} PINO \citep{li2024physics} minimizes the \emph{strong-form} residual $-\Delta u - \rho$ at the interior grid nodes as the relative ratio $\|{-\Delta_h u} - \rho\|_2 / \|\rho\|_2$. It uses the FNO of Table~\ref{tab:fno} with second-order central differences on the grid. We impose the boundary condition exactly by setting the boundary nodes of the output to zero, exactly as we do it for $L_{\textup{LS}}$ and $L_{\textup{PLS}}$. The reference implementations instead multiplies the output by the ansatz $\sin(\pi x)\sin(\pi y)$, which on the present data set, a truncated sine series, encodes much of the structure of the solution and does not transfer to other geometries; we therefore report it only as a control. 

\paragraph{PI-DeepONet Baseline}
The other baseline, PI-DeepONet, \citep{wang2021learning} uses an unstacked DeepONet ($p=128$ basis functions, branch and trunk MLPs of width $256$ and depth $4$, $64$ random Fourier features on the trunk input, $1.4\cdot 10^6$ parameters) differentiated by automatic differentiation through the coordinate input of the trunk. PI-DeepONet uses a soft boundary penalty as in the original formulation with penalization strength $\lambda_{\mathrm{bc}}$. 
Its seed-to-seed spread, $0.163$ to $0.258$, is the largest in the table, and every baseline run is still descending at the end of the budget, so these numbers characterize the fixed budget rather than a converged state.

\begin{table}[htb]
   \caption{FNO hyperparameters used for all Poisson experiments.}
   \label{tab:fno}
   \centering
   \footnotesize
   \begin{tabular}{ll}
      \toprule
      \textbf{Component} & \textbf{Value} \\
      \midrule
      Fourier modes.             & $16\times16$ \\
      Hidden channels (width)    & $64$ \\
      Fourier layers             & $5$ \\
      Lifting / projection channels & $128$ / $128$ \\
      Positional embedding       & grid coordinates, appended as channels \\
      Input / output channels    & $1$ / $1$ \\
      Trainable parameters       & $3.01\times10^{6}$ \\
      \bottomrule
   \end{tabular}
\end{table}

\paragraph{Multigrid Preconditioner} We take $P$ to be a single geometric multigrid V-cycle for $A$ starting from a zero initial guess. We use dyadic coarsening on grids of fineness $2^k+1$ to guarantee nested grids, and denote by $A_\ell$ the stiffness matrix on level $\ell=0,\dots,L-1$, where $\ell=0$ is the finest grid and $L$ the number of levels. As smoother, we use weighted Jacobi with $\nu_1=\nu_2=2$ iterations and damping $\omega=8/9$, which is the optimal value for $Q_1$ elements in two dimensions, bilinear prolongation $\Pi_\ell$ with restriction $\Pi_\ell^\top$, and a direct solve on the coarsest grid; Algorithm~\ref{alg:vcycle} summarizes the V-cycle.

\begin{algorithm}[htb]
   \caption{V-cycle $\mathrm{V}(\ell,\mathbf r)$ on level $\ell$; the preconditioner is $P\mathbf r := \mathrm{V}(0,\mathbf r)$. Here $D_\ell = \operatorname{diag}(A_\ell)$.}
   \label{alg:vcycle}
   \begin{algorithmic}[1]
      \If{$\ell = L-1$}
         \State \Return $A_{L-1}^{-1}\mathbf r$ \Comment{coarsest grid: direct solve}
      \EndIf
      \State $\mathbf e \gets \mathbf 0$
      \For{$\nu_1$ sweeps}
         \State $\mathbf e \gets \mathbf e + \omega\, D_\ell^{-1}\big(\mathbf r - A_\ell \mathbf e\big)$ \Comment{pre-smoothing}
      \EndFor
      \State $\mathbf r_c \gets \Pi_\ell^\top\big(\mathbf r - A_\ell \mathbf e\big)$ \Comment{restrict the residual}
      \State $\mathbf e \gets \mathbf e + \Pi_\ell\, \mathrm{V}(\ell+1, \mathbf r_c)$ \Comment{coarse-grid correction}
      \For{$\nu_2$ sweeps}
         \State $\mathbf e \gets \mathbf e + \omega\, D_\ell^{-1}\big(\mathbf r - A_\ell \mathbf e\big)$ \Comment{post-smoothing}
      \EndFor
      \State \Return $\mathbf e$
   \end{algorithmic}
\end{algorithm}

\paragraph{Optimization} The FNO is trained with Adam and a cosine annealed learning rate; see Table~\ref{tab:optim}. Every setting is shared across losses and preconditioners \emph{except the learning rate}, which is tuned per loss. The rate is selected by a grid search over $\{10^{-4}, 3\cdot 10^{-4}, 10^{-3}, 3\cdot 10^{-3}, 10^{-2}, 3\cdot 10^{-2}\}$, run at the full $500$-epoch budget with a single seed, choosing the rate that attains the lowest validation relative $L^2$ over training. For PI-DeepONet, whose boundary penalty introduces a second hyperparameter, the weight $\lambda_{\mathrm{bc}}$ is tuned jointly with the learning rate on the grid $\{3\cdot 10^{-5}, 10^{-4}, 3\cdot 10^{-4}, 10^{-3}\} \times \{0.3, 1, 3\}$, extended to $\lambda_{\mathrm{bc}} \in \{0.1, 0.03, 0.01, 0.003, 0.001, 0\}$ at the best rates whenever the optimum fell on the edge of the grid; the selected pair $(10^{-4}, 0.03)$ is an interior optimum in both variables, with the validation error rising to $0.24$ at $\lambda_{\mathrm{bc}}=0.01$ and $0.30$ at $0.1$. The final runs then repeat the selected configuration over three seeds. Throughout, the checkpoint attaining the lowest validation error is restored after training, and the reported test error is evaluated at that checkpoint.

\begin{table}[htb]
   \caption{Optimization hyperparameters for the Poisson experiments. Shared across losses except
   the learning rate, which is swept per loss as described above.}
   \label{tab:optim}
   \centering
   \footnotesize
   \begin{tabular}{ll}
      \toprule
      \textbf{Setting} & \textbf{Value} \\
      \midrule
      Optimizer                 & Adam, $\beta_1 = 0.9$, $\beta_2 = 0.999$, $\varepsilon = 10^{-8}$ \\
      Weight decay              & $0$ \\
      Schedule                  & cosine, annealed from $\eta$ to $\eta/10$ over all epochs \\
      Learning rate $\eta$      & $3\cdot 10^{-4}$ for $L_\textup{data}$ and $L_\textup{PLS}$ \\
                                & $3\cdot 10^{-3}$ for $L_\textup{LS}$ and PINO \\
                                & $10^{-4}$ for PI-DeepONet, with boundary penalty weight $\lambda_{\mathrm{bc}} = 0.03$ \\
      Epochs                    & $500$ \\
      Batch size                & $32$ \\
      Train / validation / test samples & $1024$ / $128$ / $256$ \\
      Model selection           & lowest validation relative $L^2$, restored after training \\
      Random seeds              & $42, 43, 44$ (final runs); $42$ for the learning-rate sweep \\
      \bottomrule
   \end{tabular}
\end{table}

\section{Details of the Allen--Cahn Experiment}\label{appendix:details_allen_cahn}

\paragraph{Operator} We consider the Allen--Cahn equation with homogeneous Dirichlet boundary conditions on the unit square $\Omega = [0,1]^2$ and the time interval $I = [0,T]$
\begin{equation}\label{eq:appendix_allen_cahn}
   \partial_t u - \Delta u + \varepsilon^2\big(u^3 - u\big) = 0 \quad \text{in } I\times\Omega,
   \qquad
   u = 0 \quad \text{on } I\times\partial\Omega,
   \qquad
   u(0) = \rho \quad \text{in } \Omega.
\end{equation}
We use $\varepsilon=32$, and are interested in the solution operator $\rho\mapsto u$ taking an initial condition to the solution $u$. However, we do not learn the full solution at once, but instead focus on a time-stepping operator. To this end, let $S(t)\rho = u(t)$ denote the solution semigroup. We are interested in approximating $S(\tau)$ for a fixed $\tau$ and use $\tau=0.01$ in our experiments. Both for generating training data, and for the physics-informed losses we must commit to a time discretization, and we choose the classical convex-concave splitting scheme, which is first order and unconditionally stable \citep{eyre1998unconditionally}. Hence given $u_k \in L^2(\Omega)$, the next time step $u_{k+1}\in H^1_0(\Omega)$ solves
\begin{equation}\label{eq:appendix_cc_step}
   \frac1\tau\big(u_{k+1} - u_k, v\big) + \big(\nabla u_{k+1}, \nabla v\big) + \varepsilon^2\big(u_{k+1}^3 - u_k, v\big) = 0
   \quad \text{for all } v\in H^1_0(\Omega),
\end{equation}
where $(\cdot,\cdot)$ denotes the $L^2(\Omega)$ inner product. The operator we learn is the resulting time-stepping operator
\begin{equation*}
    G : L^2(\Omega) \to H^1_0(\Omega), \quad u_k \mapsto u_{k+1},
\end{equation*}
whose iterates $G^k(\rho)$ are an approximation of $S(k\tau)\rho$. 
The operator $G(u_k)$ is characterized as the root of the residual $R : H^1_0(\Omega)\times L^2(\Omega) \to H^{1}_0(\Omega)^*$
\begin{equation}\label{eq:appendix_cc_residual}
   R(w, u_k)(v)
   =
   \frac1\tau\big(w - u_k, v\big) + \big(\nabla w, \nabla v\big) + \varepsilon^2\big(w^3 - u_k, v\big)
   \quad \text{for all } v\in H^1_0(\Omega),
\end{equation}

\paragraph{Data Set} For the initial conditons, we use the same dataset as in the case of Poisson's equation, the frequency parameter is set to $K=4$. A sample together with predictions is shown in Figure~\ref{fig:ac-snapshots}.

\paragraph{Spatial Discretization} As for Poisson's equation, we discretize \eqref{eq:appendix_cc_step} with bilinear finite elements $U_h = Q_1^h$ on a uniform $129\times 129$ grid whose nodes coincide with the FNO output grid. With the stiffness matrix $A$ and the mass matrix $M$, this yields the discrete residual
\begin{equation*}
   R_h(\mathbf u_{k+1}, \mathbf u_k)
   =
   M\left[ \frac{\mathbf u_{k+1} - \mathbf u_k}{\tau} + \varepsilon^2\left( \mathbf u_{k+1}^3 - \mathbf u_k \right) \right]
   +
   A \mathbf u_{k+1},
\end{equation*}
where the cube $\mathbf u_{k+1}^3$ is taken entrywise. This is the residual $\phi$ of \eqref{eq:convex_concave_time_stepper} in the main text.

\paragraph{Neural Operator and Baselines} These are identical to the Poisson case, we use an FNO and the DeepONet described above.

\paragraph{Preconditioner}
For a fixed sample and a fixed time step $k$, the preconditioned least-squares loss is $\frac12\|P R_h(\mathbf w, \hat{\mathbf u}_k)\|^2$ with $\mathbf w = G_{\theta,h}(\hat{\mathbf u}_k)$. As $\hat{\mathbf u}_k$ is detached, its Gauss--Newton matrix in $\mathbf w$ is $J(\mathbf w)^\top P^\top P J(\mathbf w)$, where
\begin{equation*}
   J(\mathbf w)
   =
   \partial_{\mathbf w} R_h(\mathbf w, \hat{\mathbf u}_k)
   =
   \tau^{-1}M + A + 3\varepsilon^2 M \operatorname{diag}(\mathbf w^2)
\end{equation*}
is the Jacobian of the residual, and the relevant conditioning is that of $PJ(\mathbf w)$. The Jacobian depends on the state and is not symmetric. We therefore freeze it at $\mathbf w^2 = 1$, the value in the two phases $u = \pm 1$ that the Allen--Cahn dynamics drives the solution to, which gives the symmetric positive definite and state-independent matrix
\begin{equation*}
   J_0 = \big(\tau^{-1} + 3\varepsilon^2\big) M + A,
   \qquad
   P \approx J_0^{-1}.
\end{equation*}
In particular, the multigrid hierarchy for $J_0$ is built once, before training. The unpreconditioned least-squares loss sees $\kappa(J)^2$, and the following lemma shows that $J_0$ is much better conditioned than the Poisson stiffness matrix $A$. We can get a rough estimate of the condition number via

\begin{equation*}
   h^2(\tau^{-1} + 1) 
   \lesssim
   \tau^{-1} M + 3 \varepsilon^2 M \operatorname{diag}(u^2) + A 
   \lesssim
   h^2 (\tau^{-1} + 3\varepsilon^2) + 1,
\end{equation*}
where we used the $h^2$ scaling of the mass matrix in two spatial dimensions, and $h^2\lesssim A \lesssim 1$ for the stiffness matrix. This implies
\begin{equation*}
   \kappa(J_0) 
   \lesssim
   \frac{h^2(\tau^{-1} + 3\varepsilon^2) + 1}{h^2(\tau^{-1} + 1)}
   \leq
   1 + \tau\varepsilon^2\left( 3 + \frac{1}{h^2\varepsilon^2} \right). 
\end{equation*}
The choice $h\varepsilon=1/4$ is reasonable for Allen-Cahn in order to resolve interfaces of width $\varepsilon$ and used in all our experiments. Applying this yields the simpler bound
\begin{equation*}
   \kappa(J_0)
   \lesssim
   1 + 19\tau\varepsilon^2
   =
   1 + \frac{19}{16}\frac{\tau}{h^2}.
\end{equation*}
We see that conditioning with $\mathcal O(h^{-2})$ remains in theory. In practice it is strongly counteracted by the stepsize $\tau$. This is why ill-conditioning is much less pronounced for our Allen-Cahn time stepper example.

\begin{figure}[p]
   \centering
   \includegraphics[width=\linewidth]{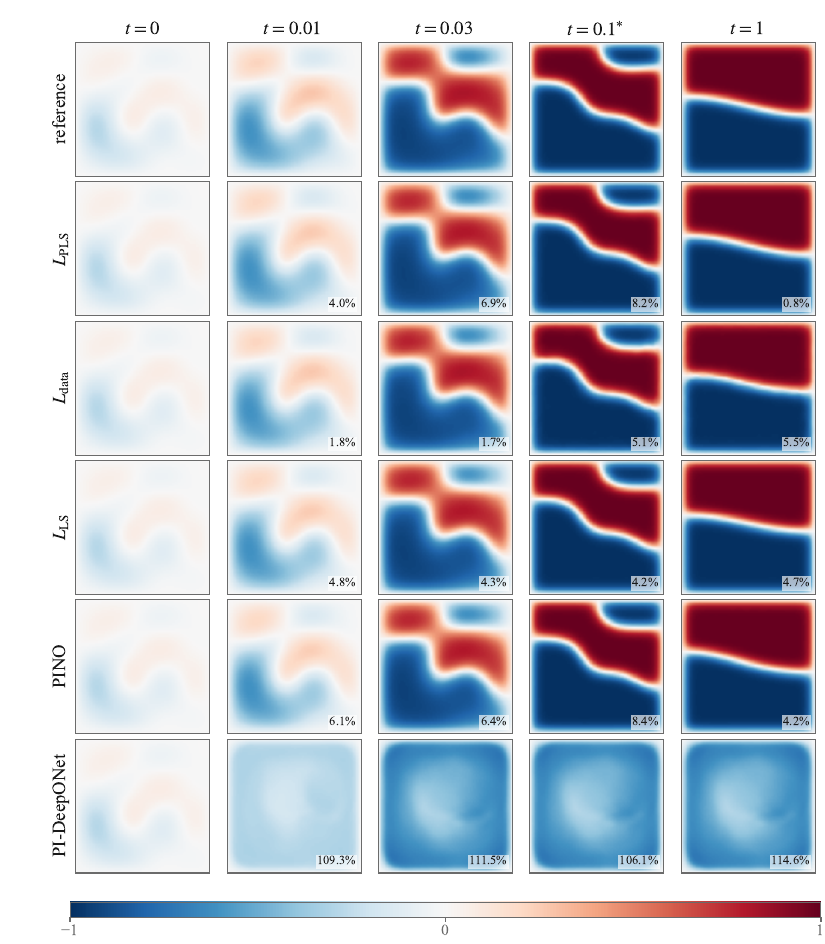}
   \caption{Roll-out of the Allen--Cahn time-stepping operator, from the initial condition to
   $T=1$ past the training horizon $t=0.1$ marked by the star. The different approaches are in the rows. Shown is one test sample. Between $t=0.01$ and $t=0.1$ the phases separate and the interfaces form, after which the solution only coarsens. All physics-informed arms except PI-DeepONet track the reference far beyond the training horizon, and the error of $L_\textup{PLS}$ decreases along the roll-out, to $0.8\%$ at $T=1$; PI-DeepONet never separates the phases.}
   \label{fig:ac-snapshots}
\end{figure}

\section{Details of the Stokes Equation Experiment}\label{appendix:details_stokes}

\paragraph{Operator} We consider the stationary Stokes equations \eqref{eq:stokes_equation} and learn the solution operator $G: \rho \mapsto (u, p)$ from the body force to the velocity--pressure pair. In the weak form, $ U = H^1_0(\Omega)^2 \times L^2_0(\Omega)$ and $a\big((u,p),(v,q)\big) = (\nabla u, \nabla v) - (p, \nabla\cdot v) - (q, \nabla\cdot u)$, a symmetric but indefinite bilinear form.

\paragraph{Domain and Mesh} $\Omega$ is the unit square with a circular hole of radius $0.14$ centred at $(0.40, 0.50)$. The hole is placed off-centre, so that the solution does not inherit symmetry from the domain, and the no-slip condition $u = 0$ is imposed on \emph{both} components of $\partial\Omega$. The mesh is an unstructured Gmsh triangulation at target edge length $0.035$, giving $1928$ elements, $3998$ velocity ($P_2$) nodes of which $284$ lie on $\partial\Omega$, and $1035$ pressure ($P_1$) nodes, i.e.\ $2\cdot 3998 + 1035 = 9031$ degrees of freedom with $568$ velocity components constrained. The size is chosen to match the $65^2 = 4225$ velocity and $33^2 = 1089$ pressure nodes of a structured Taylor--Hood grid at the same problem size, so that the dimension of the discrete problem and the cost of the model are comparable to the Poisson experiment. The boundary node set is taken from the incidence of the boundary facet elements.

\paragraph{Data Set} Each component of the body force is a truncated sine series with the same spectral decay as the Poisson sources,
\begin{equation}\label{eq:appendix_stokes_data}
   f_c(x,y) = \sum_{k,l=1}^{K} a_{ckl}\,(k^2+l^2)^{-1/2}\,\sin(k\pi x)\sin(l\pi y), \qquad c = 1, 2, \qquad a_{ckl} \sim \mathrm{Unif}[-1,1],
\end{equation}
with $K = 10$ as in the Poisson experiment. No closed-form solution exists; the reference $(u, p)$ is the \emph{discrete} Taylor--Hood solution of the system below, computed once per sample by a sparse direct solve in double precision with the pressure pinned at one node and subsequently re-gauged to zero mean, so that the label-free residual and the labels target the same discrete state. The factorization is shared across samples and the relative residual of the reference is $4.4\cdot 10^{-6}$. Stokes is linear, and we use the one free constant per sample to rescale $(f, u, p)$ so that the reference velocity has unit finite element $L^2$ norm. The physics then fixes the other two magnitudes, $u \sim f/ k^2$ and $p \sim f/k$: at unit velocity the nodal root-mean-square values are $\mathcal O(10^2)$ for $f$ and $\mathcal O(10)$ for $p$. The network therefore sees $f / f_{\mathrm{scale}}$, and its pressure channel is read as $p_{\mathrm{scale}}$ times the raw output; both are fixed constants of the data set applied at the network boundary, and neither the operator, the residual nor the reference sees them. Training, validation and test sets of $1024$, $128$ and $256$ samples are contiguous slices of one pool; as for Poisson, the seed draws both the pool and the initialization.

\paragraph{Discretization} We use the Taylor--Hood $P_2/P_1$ pair on the triangulation above: the velocity is continuous piecewise quadratic and the pressure continuous piecewise linear. Assembling the bilinear form gives the symmetric indefinite system
\begin{equation}\label{eq:appendix_stokes_system}
   A \mathbf u = \begin{pmatrix} K & D^\top \\ D & 0 \end{pmatrix} \begin{pmatrix} \mathbf w \\ \mathbf p \end{pmatrix} = \begin{pmatrix} M_u \mathbf g \\ 0 \end{pmatrix},
\end{equation}
where $K = \,\mathrm{diag}(K_{P_2}, K_{P_2})$ is the vector stiffness matrix, $D$ the discrete divergence, and $M_u \mathbf g$ the consistent load of the nodal interpolant of $g$. The residual is set to zero on the constrained velocity degrees of freedom, and the prediction is projected onto the admissible space: the velocity to zero on $\partial\Omega$ and the pressure to zero mean in the lumped $M_p$ inner product. 

\paragraph{Neural Operator} We use the Geometry-Aware Operator Transformer (GAOT) \citep{gaot}, with two input and three output channels $(u_x, u_y, p)$ and $3.40\cdot 10^{6}$ parameters. Since Stokes is the only experiment that changes backbone, we verify that the ordering of the losses is a property of the objective. We verify this by training both on the structured Poisson problem of Appendix~\ref{appendix:details_poisson}. The two architectures agree to within the seed spread, as reported in Table~\ref{tab:backbone}.

\begin{table}[htb]
   \caption{The ordering of the losses does not depend on the backbone. Structured Poisson,
   $64^2$, $K = 4$; test relative $L^2$ in percent, mean $\pm$ half-range over seeds $42/43/44$.
   Learning rate tuning picked $10^{-3}$ for both architectures.}
   \label{tab:backbone}
   \centering
   \footnotesize
   \begin{tabular}{lcc}
      \toprule
      loss & FNO & GAOT \\
      \midrule
      $L_\textup{data}$ (supervised)     & $0.59 \pm 0.06$  & $0.55 \pm 0.05$ \\
      $L_\textup{PLS}$ (preconditioned)  & $0.66 \pm 0.06$  & $0.61 \pm 0.07$ \\
      $L_\textup{LS}$ (bare residual)    & $29.13 \pm 2.22$ & $37.57 \pm 2.26$ \\
      \bottomrule
   \end{tabular}
\end{table}

\paragraph{Preconditioner} We use the block-diagonal $\mathcal P = \mathrm{diag}(K^{-1}, S^{-1})$ of Section~\ref{sec:stokes}. A geometric hierarchy does not exist on this mesh, so $A^{-1}$ is approximated by one \emph{algebraic} multigrid V$(2,2)$ cycle per velocity component; its measured contraction is about $0.4$ per cycle. $S^{-1}$ is approximated by the lumped pressure mass matrix, which is spectrally equivalent to the Schur complement $S$. The preconditioner enters the loss as residual weighting, for details see Appendix \ref{appendix:least-squares-preconditioning}.

\paragraph{Baseline} Of the tested methods in the literature, only PI-DeepONet transfers to the unstructured mesh. It minimizes the strong form of \eqref{eq:stokes_equation}. Its derivatives are taken by automatic differentiation through the coordinate input of the trunk, which is defined at any point of $\Omega$, whereas the finite-difference residual of PINO needs a grid and has no counterpart on an unstructured mesh. We use a DeepONet with three output channels, $p = 128$ basis functions, branch and trunk of width $256$ and depth $4$, $64$ random Fourier features on the trunk input and $2.54\cdot 10^{6}$ parameters, with the sensor set taken to be the mesh nodes. The boundary condition is imposed in a soft way with a penalty $\lambda_{\mathrm{bc}}$ as in the Poisson experiment. The residual is evaluated at all $3714$ interior nodes, so that the reported cost per epoch is comparable across the table. The requirement of second spatial derivatives is expensive in automatic differentiation: $6.0$ seconds per epoch against $1.5$ for the stochastic-collocation variant that draws $1024$ fresh nodes each step, for the same accuracy to within the seed spread.

\paragraph{Optimization} The optimization is carried out as in the Poisson experiment (Table~\ref{tab:optim}): Adam with a cosine schedule from $\eta$ to $\eta/10$ over $500$ epochs, batch size $32$, $1024$/$128$/$256$ samples, and for every method a grid search at the full budget on seed $42$ selecting the configuration with the lowest validation error, followed by three seeds. The validation error, and hence the model selection, is the mean of the velocity and pressure relative errors. The Schur weight was searched over $\omega_S \in \{0.5, 4, 16, 32, 64, 256\}$ at $\eta = 10^{-3}$, and for PI-DeepONet the learning rate, boundary weight and continuity weight over the grid of the structured Poisson study, whose selected values $(\eta, \lambda_{\mathrm{bc}}, w) = (10^{-4}, 0.1, 100)$ were re-selected here; at that rate $w = 1$ gives $80.9\%$ and $w = 1000$ gives $100.0\%$. Table~\ref{tab:optim_stokes} lists the selected values.

\begin{table}[htb]
   \caption{Hyperparameters for the Stokes experiment; everything not listed is as in Table~\ref{tab:optim}.}
   \label{tab:optim_stokes}
   \centering
   \footnotesize
   \begin{tabular}{ll}
      \toprule
      \textbf{Setting} & \textbf{Value} \\
      \midrule
      Learning rate $\eta$      & $10^{-3}$ for $L_\textup{PLS}$, with Schur weight $\omega_S = 16$ \\
                                & $10^{-3}$ for $L_\textup{data}$ and $L_\textup{LS}$ \\
                                & $10^{-4}$ for PI-DeepONet, with $w = 100$ and $\lambda_{\mathrm{bc}} = 0.1$ \\
      Collocation (PI-DeepONet) & all $3714$ interior nodes \\
      Model selection           & lowest validation mean of the velocity and pressure relative $L^2$ \\
      Preconditioner            & block; velocity half one algebraic V$(2,2)$ cycle \\
      Random seeds              & $42, 43, 44$ (final runs); $42$ for the grid search \\
      \bottomrule
   \end{tabular}
\end{table}

\paragraph{Metric} Velocity and pressure are scored in $L^2$ norm against the discrete reference, as the dataset-level relative error $\big(\sum_i \|e_i\|_M^2 / \sum_i \|c^*_i\|_M^2\big)^{1/2}$ with $M = M_u$ for the velocity and $M = M_p$ for the pressure, on the test set at the best-validation checkpoint; Table~\ref{tab:summary} reports both, each as mean $\pm$ half-range over the three seeds. The supervised and preconditioned runs are still improving at the end of the budget (best epoch $492$ and $498$ of $500$), so their numbers characterize the fixed budget rather than a converged state; the bare least-squares control is not, peaking between epochs $64$ and $77$ and drifting upwards thereafter.

\begin{figure}[htb]
  \centering
  \includegraphics[width=\linewidth]{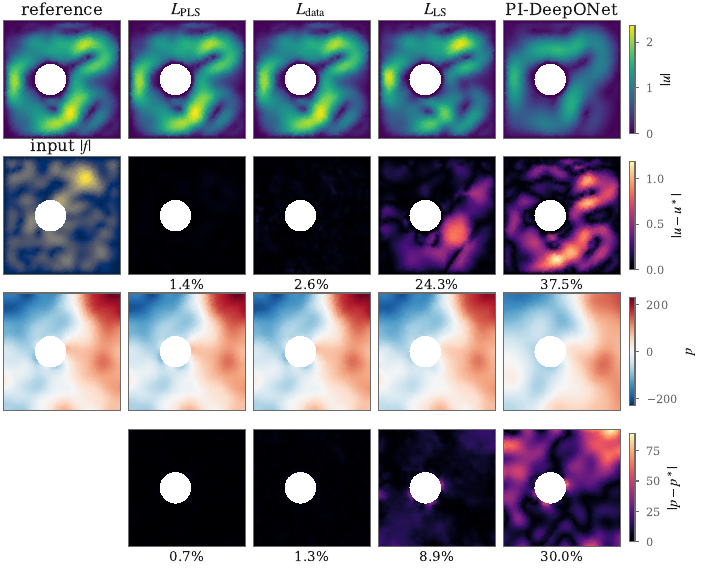}
  \caption{Rows, top to bottom: velocity magnitude; pointwise velocity error; pressure; pointwise pressure error. The leftmost column is the reference velocity, the body force that produced it, and the reference pressure.  $L_\textup{PLS}$ and $L_\textup{data}$ are visually indistinguishable from the reference in both fields; the unpreconditioned control has lost amplitude in the recirculation left of the obstacle and carries its pressure error in a ring around it; PI-DeepONet captures only the coarsest structure of either field.}
  \label{fig:stokes-samples}
\end{figure}

\paragraph{Preconditioner ablation} Table~\ref{tab:stokes_ablation} isolates the contribution of $\mathcal P$. Removing it entirely deteriorates the accuracy of the velocity by a factor of 20. Mis-weighting its pressure block is problematic too: $\omega_S$ multiplies the pressure block of the norm the loss measures in, so a large $\omega_S$ makes that norm pressure-dominated and optimization focuses mainly on pressure. At $\omega_S = 256$ the pressure error is still $3.9\%$ while the velocity error is $22.9\%$. The optimum is a broad plateau over $\omega_S \in [4, 16]$, which can also be seen numerically in the conditioning (Table~\ref{tab:stokes_ablation}); it is minimized at $\omega_S = 4$ and rises on both sides, in the same shape as the error, and the un-preconditioned residual's $\kappa(A^2) = 1.2\cdot 10^{11}$ is five orders of magnitude worse than the best preconditioned value. 
\begin{table}[htb]
   \caption{Preconditioner ablation on the unstructured Stokes problem: test relative finite
   element $L^2$ error, seed $42$, everything else as in Table~\ref{tab:summary}. The bold column is the selected value; selection is on the validation mean of the two fields, which is why it is $\omega_S = 16$ and
   not the marginally lower velocity error at $\omega_S = 4$. The last row is the condition number
   of the loss Hessian $\kappa(A^2)$ for $L_\textup{LS}$
   and $\kappa( A P A)$ for $L_\textup{PLS}$. It is minimized at $\omega_S = 4$ and follows the error across the sweep.}
   \label{tab:stokes_ablation}
   \centering
   \footnotesize
   \begin{tabular}{lccccccc}
      \toprule
      & $L_\textup{LS}$ & \multicolumn{6}{c}{$L_\textup{PLS}$, by Schur weight $\omega_S$} \\
      \cmidrule(lr){2-2} \cmidrule(lr){3-8}
      & no $\mathcal P$ & $0.5$ & $4$ & $16$ & $32$ & $64$ & $256$ \\
      \midrule
      velocity [\%] & $30.51$ & $2.58$ & $1.63$ & $\mathbf{1.69}$ & $3.35$ & $4.25$ & $22.94$ \\
      pressure [\%] & $10.22$ & $1.25$ & $0.95$ & $\mathbf{0.79}$ & $1.40$ & $1.62$ & $3.93$ \\
      \midrule
      $\kappa$ & $1.2\cdot 10^{11}$ & $3.7\cdot 10^{6}$ & $1.0\cdot 10^{6}$ & $1.9\cdot 10^{6}$
               & $3.2\cdot 10^{6}$ & $5.8\cdot 10^{6}$ & $2.1\cdot 10^{7}$ \\
      \bottomrule
   \end{tabular}
\end{table}

\end{document}